\definecolor{olive}{rgb}{0.6, 0.6, 0.2}
\definecolor{sand}{rgb}{0.8666666666666667, 0.8, 0.4666666666666667}
\definecolor{wine}{rgb}{0.5333333333333333, 0.13333333333333333, 0.3333333333333333}
\definecolor{deblue}{RGB}{11,132,147}
\definecolor{ocra}{RGB}{204, 119, 34}
\newcommand{\fcircle}[2][red,fill=red]{\tikz[baseline=-0.5ex]\draw[#1,radius=#2] (0,0.03) circle ;}
\newmdtheoremenv[
    backgroundcolor=lightgray!10, % Set the background to very light gray
    linecolor=black, % Border color
    linewidth=0.5pt, % Border width
    roundcorner=5pt, % Rounded corners
    innertopmargin=8pt, % Inner top margin
    innerbottommargin=8pt, % Inner bottom margin
    skipabove=10pt, % Space above the box
    skipbelow=10pt % Space below the box
]{theorem}{Theorem}
\begin{document}

\title[Article Title]{You KAN Do It in a Single Shot: Plug-and-Play Methods with Single-Instance Priors}

%%=============================================================%%
%% GivenName	-> \fnm{Joergen W.}
%% Particle	-> \spfx{van der} -> surname prefix
%% FamilyName	-> \sur{Ploeg}
%% Suffix	-> \sfx{IV}
%% \author*[1,2]{\fnm{Joergen W.} \spfx{van der} \sur{Ploeg} 
%%  \sfx{IV}}\email{iauthor@gmail.com}
%%=============================================================%%

\author[1]{\fnm{Yanqi} \sur{Cheng}}\email{yc443@cam.ac.uk}

\author[1]{\fnm{Carola-Bibiane} \sur{Schönlieb}}\email{cbs31@cam.ac.uk}

\author[2]{\fnm{Angelica I} \sur{Aviles-Rivero}}\email{aviles-rivero@tsinghua.edu.cn}

\affil[1]{\orgdiv{Department of Applied Mathematics and Theoretical Physics}, \orgname{University of Cambridge}}

\affil[2]{\orgdiv{Yau Mathematical Sciences Center}, \orgname{Tsinghua University}}

\abstract{
The use of Plug-and-Play (PnP) methods has become a central approach for solving inverse problems, with denoisers serving as regularising priors that guide optimisation towards a clean solution. In this work, we introduce KAN-PnP, an optimisation framework that incorporates Kolmogorov-Arnold Networks (KANs) as denoisers within the Plug-and-Play (PnP) paradigm. KAN-PnP is specifically designed to solve inverse problems with single-instance priors, where only a single noisy observation is available, eliminating the need for large datasets typically required by traditional denoising methods. We show that KANs, based on the Kolmogorov-Arnold representation theorem, serve effectively as priors in such settings, providing a robust approach to denoising.
We prove that the KAN denoiser is Lipschitz continuous, ensuring stability and convergence in optimisation algorithms like PnP-ADMM, even in the context of single-shot learning. Additionally, we provide theoretical guarantees for KAN-PnP, demonstrating its convergence under key conditions: the convexity of the data fidelity term, Lipschitz continuity of the denoiser, and boundedness of the regularisation functional. These conditions are crucial for stable and reliable optimisation.
Our experimental results show, on super-resolution and joint optimisation, that KAN-PnP outperforms exiting  methods, delivering superior performance in single-shot learning with minimal data. The method exhibits strong convergence properties, achieving high accuracy with fewer iterations. 
}

\keywords{Inverse Problems, Plug-and-Play Methods, Deep Denoiser Priors }

\maketitle

\section{Introduction}\label{sec1}
Inverse problems are widely studied in scientific computing and imaging, where the goal is to recover an unknown signal $x \in \mathbb{R}^N$ from indirect and often noisy measurements $y \in \mathbb{R}^M$. These measurements are typically modelled as: 
\begin{equation}
    y = A(x) + \epsilon,
\end{equation}
where $A : \mathbb{R}^N \to \mathbb{R}^M$ is a forward operator representing the measurement process, and $\epsilon$ accounts for noise or perturbations in the system. Depending on the application, $A$ may be linear, such as in magnetic resonance imaging (MRI)~\cite{fessler2010model} or computed tomography (CT)~\cite{elbakri2002statistical}, or nonlinear, such as in phase retrieval or optical imaging.

Due to challenges like noise contamination, incomplete data (e.g., undersampling), or the ill-posed nature~\cite{tikhonov1987ill, engl1996regularization, chambolle2016introduction} of the inverse operator $A^{-1}$, direct inversion is generally unstable or infeasible. To address this, regularisation methods are employed to incorporate prior knowledge about the desired solution $x$. Mathematically, this leads to solving a variational problem:
\begin{equation}\label{reg_Framework}
\min_{x \in \mathbb{R}^N} D(x) + \lambda R(x),
\end{equation}
where $D(x)$ is the data-fidelity term that ensures consistency with the measurements $y$, $R(x)$ is the regularisation term encoding prior knowledge about $x$, and $\lambda$ balances the two terms.

The proximal operator is a key mathematical tool~\cite{gribonval2020characterization} in solving regularised problems of the form given in~\eqref{reg_Framework}. For a function $R(x)$, its proximal operator is defined as:
\begin{equation}
\text{Prox}_{\gamma R}(v) = \arg\min_{x \in \mathbb{R}^N} \left\{ R(x) + \frac{1}{2\gamma} \|x - v\|^2 \right\},
\end{equation}
where $\gamma > 0$ is a step size parameter. Intuitively, the proximal operator can be interpreted as a denoising step that produces a regularised estimate of $v$ by minimising the sum of $R(x)$ and a quadratic penalisation term.

Many iterative optimisation algorithms, such as proximal gradient methods and the alternating direction method of multipliers (ADMM)~\cite{gabay1976dual}, leverage the proximal operator to solve problems with composite objectives. For instance, in ADMM, the proximal operator of the regularisation term $R(x)$ is applied iteratively to refine the solution.

Traditionally, $R(x)$ is chosen to be a mathematically explicit regularisation function, such as total variation, wavelet sparsity, or nuclear norm. However, modern approaches have shifted toward \textit{implicit regularisation}, where the proximal operator is replaced with a learned or pre-defined denoiser $H_\sigma(x)$. This substitution leads to the Plug-and-Play (PnP) framework, introduced by Venkatakrishnan et al.~\cite{venkatakrishnan2013plug}, which exploits the equivalence between regularised denoising and the proximal operator.
In the PnP framework, the optimisation problem is reformulated using a denoiser $H_\sigma(x)$ as a surrogate for $\text{Prox}_{\gamma R}$. For example, the ADMM iteration in PnP is given by:
\begin{equation}
\begin{aligned}
    x^{k+1} &= H_\sigma(z^k - u^k), \\
    z^{k+1} &= \text{Prox}_{\mu^{-1} D}(x^{k+1} + u^k), \\%\quad %\text{(1)} \\
    u^{k+1} &= u^k + x^{k+1} - z^{k+1}.
\end{aligned}
\end{equation}
where $\sigma$ controls the denoising strength of $H_\sigma$, and $\mu$ is a parameter related to the data-fidelity term. 

Building on this foundation, PnP has evolved significantly since its introduction by~\cite{venkatakrishnan2013plug}. Early approaches demonstrated its effectiveness across diverse imaging tasks~\cite{meinhardt2017learning}, while subsequent studies~\cite{ryu2019plug,sun2019online,hurault2022proximal,hurault2021gradient} refined the framework by enhancing stability, convergence, and applicability. Additionally, works such as~\cite{teodoro2018convergent,yuan2020plug,zhang2017learning,ono2017primal,sun2019online} extended its reach to more complex imaging scenarios. A notable milestone was the development of TFPnP (Tuning-Free Plug-and-Play)~\cite{wei2020tuning}, which eliminated the need for parameter tuning, simplifying practical deployment. 

The core of the Plug-and-Play (PnP) framework lies in the denoiser, $H_\sigma$, which serves as an implicit regulariser for the reconstruction process. This denoiser is typically trained using a large amount of data, allowing it to learn rich priors that generalise well to diverse instances of the underlying task. 
PnP frameworks have been developed utilising a variety of denoisers. Classical methods like BM3D~\cite{dabov2007image} and TV~\cite{jordan1881series} remain popular due to their robustness and minimal reliance on training data. In the meantime, the rise of deep learning-based denoisers~\cite{meinhardt2017learning,zhang2017learning,laumont2022bayesian} has significantly advanced the field by capturing complex data patterns, such as DnCNN~\cite{zhang2017beyond} or U-Net~\cite{ronneberger2015u} achieve their performance by leveraging extensive datasets to capture the statistical properties of natural images or specific domains. Recent works on Weakly Convex Ridge Regularisers (WCRR-NNs)~\cite{goujon2024learning} proposes learning explicit regularisation functionals \( R(x) \) such that the inverse problem remains (weakly) convex. These methods operate in a variational setting and typically require access to a dataset for training. However, this reliance on large-scale training datasets poses a significant challenge in domains where data availability is limited.

A key question thus arises: \textit{how can PnP frameworks be adapted to solve inverse problems effectively when only a single instance or minimal data is available?} Addressing this question requires rethinking the fundamental reliance of PnP methods on large-scale training datasets and instead developing approaches that can utilise the limited data at hand efficiently. The recent application of Deep Image Prior (DIP)~\cite{ulyanov2018deep} within the Plug-and-Play (PnP) framework~\cite{sun2021plug} represents a noteworthy development. However, in practice, DIP-PnP still depends on training the model with multiple instances. Our focus, in contrast, is on minimising the data required for prior training in the PnP setting—specifically, training on a single instance. To the best of our knowledge, the only existing work that aligns with ours is that of Cheng et al.~\cite{cheng2024singleshot}, which introduces the single-shot paradigm within the Plug-and-Play context. We highlight key differences between their approach and ours:
Firstly, their prior is based on implicit neural representations, whereas ours leverages the Kolmogorov-Arnold representation theorem. Moreover, unlike their work, we provide theoretical results on both the denoiser's properties and the convergence of the entire scheme.

\medskip
\textbf{Constributions.} We introduce a novel optimisation framework called KAN-PnP, which leverages Kolmogorov-Arnold Networks (KANs) as denoisers within the Plug-and-Play (PnP) optimisation paradigm. This method addresses the challenge of solving inverse problems in the context of single-instance priors, where only a single noisy observation is available. Our approach offers a significant departure from traditional denoising methods that rely on large datasets, providing a robust solution for cases with minimal data. The key contributions of this work are as follows:

\smallskip
\noindent
\fcircle[fill=deblue]{2pt} We present the first framework to incorporate Kolmogorov-Arnold Networks (KANs)~\cite{liu2024kan} as denoisers in the PnP paradigm, focusing on solving inverse problems with single-instance priors: \begin{itemize}[noitemsep, nolistsep] 
\item[--]  We demonstrate how KANs, based on the Kolmogorov-Arnold representation theorem, can effectively serve as priors in settings where only one noisy observation is available, bypassing the need for large datasets typically required by traditional denoising techniques. 
\item[--]  We prove that the KAN denoiser is Lipschitz continuous, ensuring its stability and guaranteeing convergence in optimisation algorithms like PnP-ADMM (Theorem~\ref{thm1}). 
\item[--] The framework is designed to handle single-shot learning, where the denoiser is adapted to a specific image instance, allowing it to perform well in real-world scenarios with limited data. 
\end{itemize}
\fcircle[fill=deblue]{2pt}  We establish the convergence of the KAN-PnP algorithm, proving that it converges to a stationary point under certain conditions (Theorem~\ref{thm2}):  I) Convexity of the Data Fidelity Term: Ensures the well-posedness of the optimisation problem and the solvability of the data fidelity subproblem. II) Lipschitz Continuity of the Denoiser: Guarantees the stability of the denoising step, ensuring that the method behaves well during optimisation. III) Boundedness of the Regularisation Functional: Prevents unbounded behaviour in the regularisation term, ensuring a stable optimisation process. 

\smallskip
\noindent
\fcircle[fill=deblue]{2pt}  We validate the effectiveness of KAN-PnP through extensive experiments on various inverse problems. We demonstrate that  KAN-PnP significantly outperforms existing methods, when trained with a single instance observation. Moreover, we show that our method shows excellent convergence properties, reliably approaching a stationary point with minimal iterations. Finally, We compare KAN-PnP with state-of-the-art denoising methods and show that it provides superior performance for single and multi-operator inverse problems including super-resolution and joint optimisation.

\section{Proposed Method}

We remain to the reader that we consider the following class of optimisation problems from the Plug-and-Play perspective:
\begin{equation}
    \min_{x, z} f(x) + \lambda R(H_\sigma(z)), \quad \text{subject to} \quad x = z
\end{equation}
where \( f(x) \) is the data fidelity term, \( \lambda \) is a regularisation parameter, and \( R(x) \) is the regularisation functional implicitly defined through a network  denoiser \( H_\sigma(x) \). This framework generalises a wide range of inverse problems. We note that (5) should be understood as a conceptual variational formulation underpinning the Plug-and-Play scheme. While $R(H_\sigma(z))$ is not used explicitly in the algorithm, it allows us to interpret the denoising step as a surrogate proximal operator of an implicit regulariser $R$. 

A critical component of the Plug-and-Play methods  is the choice of the denoiser. In traditional denoising techniques, methods such as non-local means (NLM)~\cite{buades2005non}, BM3D~\cite{dabov2007image}, and deep learning-based approaches like DnCNN~\cite{zhang2017beyond} have been widely used. These methods, although effective, \textit{often rely on large datasets for training, which limits their application in scenarios where only a single noisy observation is available}. In this section, we detail a novel framework called KAN-PnP, which uses a single-instance prior.

\subsection{KAN as the Ideal Single-Instance Prior}
In our framework, \textit{we focus on a single-shot learning setting}, where we only have a single noisy observation of the underlying clean signal. This presents a challenge, as traditional denoisers may struggle in such settings, where the noise characteristics are not fixed across samples.
To address this, we propose a novel prior drawing from Kolmogorov-Arnold Networks (KANs) as the denoiser. KANs are a powerful class of neural networks based on the Kolmogorov-Arnold representation theorem, which ensures that any continuous function can be approximated by a finite composition of smooth, univariate functions. This ability to smoothly approximate complex mappings makes KANs particularly well-suited for denoising tasks, as they naturally preserve the underlying structure of the signal while removing noise. KANs introduce a form of structured inductive bias through kernel-based interpolation over learnable anchors, which induces local smoothness and adaptivity — properties that are well-aligned with the statistical structure of images. Unlike MLP-based INRs, which are globally connected and prone to spectral bias (favoring low-frequency content), KAN’s piece- wise smooth interpolation better captures high-frequency details and edges, which are essential for image restoration tasks (imaging inverse problems). In the single- shot setting, where no external supervision is available, this inductive bias acts as a strong prior that regularises the learning dynamics and helps avoid overfitting to noise.
Kolmogorov-Arnold Networks (KANs) are a class of neural networks based on the Kolmogorov-Arnold representation theorem~\cite{Kolmogorov1961,kolmogorov1957representations,braun2009constructive}, which guarantees that any continuous function on a multidimensional domain can be approximated by a finite composition of continuous functions.  KAN is defined as follows.
Let \( x \in \mathbb{R}^d \) be the input vector, and let \( y \in \mathbb{R} \) be the output. A KAN approximates a continuous function \( f(x) \) by composing several layers of univariate basis functions. Specifically, a KAN can be expressed as:
\[
f(x) = \sum_{i=1}^{m} \sum_{j=1}^{n} \alpha_{ij} \phi_{ij} (x_i),
\]
where \( \phi_{ij}(x_i) \) are univariate functions (typically chosen to be continuous and smooth, such as B-splines or polynomial functions). Moreover,  \( \alpha_{ij} \) are the weights (parameters) of the network, and \( x_i \) represents the components of the input vector \( x \). Also, \( m \) and \( n \) are the number of hidden layers and basis functions, respectively.
Each hidden layer \( \phi_{ij}(x_i) \) is constructed from a basis function that is chosen to provide smooth approximations, and the overall network combines these functions to model the desired output.

\smallskip
\textbf{KAN-Plug-and-Play Method.} In the context of Plug-and-Play Methods, and in particular in this work PnP-ADMM, the denoiser serves as the regularising prior, which guides the optimisation process toward a clean solution. One critical property of the denoiser is Lipschitz continuity~\cite{hurault2022proximal}, which ensures stability of the denoiser. This property is essential for the convergence of iterative algorithms like PnP-ADMM, where stability and well-behaved priors are necessary for guaranteeing the algorithm’s success.

\begin{theorem}[Lipschitz Continuity of the KAN Denoiser]\label{thm1}
Let $H(x)$ represent the KAN denoiser defined as a composition of layers:
\begin{equation}
H(x) = \Phi_L \circ \Phi_{L-1} \circ \dots \circ \Phi_1(x),
\end{equation}
where each layer $\Phi_l$ is a matrix of univariate B-spline functions $\phi_{l,j,i}(x)$ with bounded derivatives. Then $H(x)$ is Lipschitz continuous, i.e., there exists a constant $L > 0$ such that:
\begin{equation}
\|H(x) - H(y)\| \leq L \|x - y\|, \quad \forall x, y \in \mathbb{R}^n.
\end{equation}
\end{theorem}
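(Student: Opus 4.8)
The plan is to exploit the layered compositional structure of $H$ together with the elementary fact that a composition of Lipschitz maps is again Lipschitz, with constant equal to the product of the individual constants. The argument therefore splits into two parts: (i) show that each layer $\Phi_l \colon \mathbb{R}^{n_{l-1}} \to \mathbb{R}^{n_l}$ (with $n_0 = n$) is Lipschitz, and (ii) chain the per-layer constants. Part (ii) is entirely routine, so essentially all of the content lives in the per-layer bound.

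For part (i), I would begin at the level of a single univariate B-spline $\phi_{l,j,i}$. By hypothesis its derivative is bounded, so setting $c_{l,j,i} := \sup_{t} |\phi_{l,j,i}'(t)| < \infty$, the mean value theorem yields $|\phi_{l,j,i}(s) - \phi_{l,j,i}(t)| \le c_{l,j,i}\,|s - t|$ for all $s, t \in \mathbb{R}$; that is, each scalar spline is $c_{l,j,i}$-Lipschitz. I would then aggregate these scalar estimates into a bound for the whole layer. Using the KAN layer form $(\Phi_l(x))_j = \sum_i \phi_{l,j,i}(x_i)$, the triangle inequality followed by Cauchy--Schwarz gives, for each output coordinate,
\begin{equation}
\Big| (\Phi_l(x))_j - (\Phi_l(y))_j \Big| \le \sum_i c_{l,j,i}\,|x_i - y_i| \le \Big( \sum_i c_{l,j,i}^2 \Big)^{1/2} \|x - y\|.
\end{equation}
Squaring and summing over $j$ produces the layer estimate $\|\Phi_l(x) - \Phi_l(y)\| \le L_l \|x - y\|$ with $L_l := \big( \sum_{j,i} c_{l,j,i}^2 \big)^{1/2}$. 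Equivalently, and perhaps more cleanly, one may observe that the Jacobian of $\Phi_l$ has entries $\phi_{l,j,i}'(x_i)$, so its operator norm is uniformly bounded over $\mathbb{R}^{n_{l-1}}$, and the mean value inequality for vector-valued maps gives the same conclusion with $L_l$ replaced by the supremal spectral norm of the Jacobian (a tighter but less explicit constant). Chaining the layers then yields $\|H(x) - H(y)\| \le \big(\prod_{l=1}^L L_l\big)\|x - y\|$, which is the claim with $L = \prod_{l=1}^L L_l$.

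The main obstacle is really just verifying finiteness of the constants $c_{l,j,i}$, i.e.\ justifying the bounded-derivative hypothesis for the B-spline basis, since this is what the entire estimate rests on. This is where I would be careful: B-splines of a fixed degree on a fixed, finite knot sequence have continuous, compactly supported derivatives, which therefore attain a finite maximum, so $c_{l,j,i} < \infty$ holds automatically under the stated assumptions. Everything else—the scalar Lipschitz estimate via the mean value theorem, the Cauchy--Schwarz aggregation across the matrix-of-functions structure of a layer, and the product rule for compositions—is standard, so the proof is short once this finiteness is secured.
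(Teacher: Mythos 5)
Your proof is correct and follows essentially the same route as the paper's: bound each univariate spline by its bounded derivative, aggregate these constants over the summation structure $(\Phi_l(x))_j = \sum_i \phi_{l,j,i}(x_i)$ of a layer, and multiply the per-layer constants across the composition. If anything, your Cauchy--Schwarz aggregation gives a genuine Euclidean-norm layer bound, whereas the paper's max-of-row-sums constant $\max_j \sum_i L_{\phi_{l,j,i}}$ is implicitly an $\ell_\infty$-type estimate never reconciled with the $\ell_2$ norm in the statement, so your version is slightly more careful on that point (the paper, in turn, adds an explicit quantitative bound $C G^{k-1}$ on the spline derivatives, which you simply assume as finite per the hypothesis).
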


\begin{proof}
We begin by analysing the Lipschitz continuity of each individual layer in the KAN architecture. Recall that the activation value at layer $l+1$ is computed as:
\begin{equation}
x_{l+1, j} = \sum_{i=1}^{n_l} \phi_{l,j,i}(x_{l,i}),
\end{equation}
where $x_{l,i}$ represents the output of neuron $i$ in layer $l$, and $\phi_{l,j,i}$ denotes the univariate B-spline function that connects neuron $i$ in layer $l$ to neuron $j$ in layer $l+1$. The pre-activation values are aggregated using a summation operation, and each $\phi_{l,j,i}$ is differentiable and piecewise smooth.

Let $\Phi_l$ denote the mapping from $x_l$ to $x_{l+1}$. Each univariate spline $\phi_{l,j,i}(x)$ is parametrised using B-splines, whose maximum derivative depends on the number of control points and the smoothness order $k$. Assuming the splines are constructed with $G$ grid points and bounded coefficients, the maximum derivative $L_{\phi_{l,j,i}}$ can be explicitly bounded as:
\begin{equation}
L_{\phi_{l,j,i}} \leq C G^{k-1},
\end{equation}
where $C$ is a constant dependent on the initialisation scale and spline parameters.

Since the post-activation value at neuron $j$ in layer $l+1$ is a summation of the outputs of splines, we can bound the Lipschitz constant of each neuron as:
\begin{equation}
L_{\Phi_l, j} = \sum_{i=1}^{n_l} L_{\phi_{l,j,i}}.
\end{equation}
The Lipschitz constant for the entire layer $\Phi_l$ is therefore:
\begin{equation}
L_{\Phi_l} \leq \max_{j} \sum_{i=1}^{n_l} C G^{k-1}.
\end{equation}
The overall denoiser $H(x)$ is a composition of $L$ such layers:
\begin{equation}
H(x) = \Phi_L \circ \Phi_{L-1} \circ \dots \circ \Phi_1(x).
\end{equation}
By the chain rule for Lipschitz continuity, the Lipschitz constant of the composite function is given by the product of the Lipschitz constants of the individual layers:
\begin{equation}
L_H = \prod_{l=1}^L L_{\Phi_l}.
\end{equation}
Substituting the bound for $L_{\Phi_l}$ into this expression, we have:
\begin{equation}
L_H \leq \prod_{l=1}^L \left( \max_{j} \sum_{i=1}^{n_l} C G^{k-1} \right).
\end{equation}
Since each layer $\Phi_l$ has a finite Lipschitz constant $L_{\Phi_l}$, their composition $H(x)$ is also Lipschitz continuous with a finite constant $L_H$. This completes the proof.
\end{proof}

\textbf{On the Theorem 1 and its Implications.}
 While Lipschitz continuity of neural networks is well-known, we emphasise that our result is tailored to the Kolmogorov-Arnold Network (KAN) architecture, which differs significantly from standard ReLU-based networks. The use of smooth univariate B-spline basis functions requires a more explicit analysis to ensure global Lipschitz continuity, especially in the context of single-instance priors, where no data-driven regularisation is available.
Although the derived Lipschitz bound in Theorem 1 may be conservative, it offers a principled way to control the smoothness and stability of the KAN denoiser via architectural parameters (e.g., number of grid points $G$, spline smoothness $k$, and layer width). This enables practitioners to design networks with desirable properties, including potential non-expansiveness ($L_H \leq 1$), which is beneficial for convergence in Plug-and-Play schemes.

\subsection{Key Assumptions for KAN-PnP}
Having established that the KAN denoiser $H_\sigma$ is Lipschitz continuous, we now outline the key assumptions required to ensure the convergence of the Plug-and-Play framework. In particular, we focus on  PnP-ADMM framework in the context of single-shot learning setting. Unlike traditional settings where the denoiser is trained on a large dataset, single-shot learning involves tailoring $H_\sigma$ to a single image instance. This adaptation introduces unique challenges in satisfying the assumptions, which we address below:

\medskip
\fcircle[fill=deblue]{2pt}\textbf{ Convexity of the Data Fidelity Term.} 
The data fidelity term $f(x)$ must be convex, proper, and lower semi-continuous. In this work, we use the squared $\ell_2$-norm, defined as:
    \begin{equation}
        f(x) = \|Ax - y\|_2^2,
    \end{equation}
which is a standard choice in inverse problems due to its simplicity and convexity. This ensures that the subproblem for $z^{k+1}$ in the ADMM iterations is well-posed and solvable. The use of $f(x)$ remains valid regardless of whether $H_\sigma$ is trained on a large dataset or a single image.

\fcircle[fill=deblue]{2pt} \textbf{Lipschitz Continuity of the Denoiser.} 
As demonstrated in the previous section, the KAN denoiser $H_\sigma$ is Lipschitz continuous with a constant $L_H$. This guarantees the stability of the denoising step, making it suitable as a surrogate for the proximal operator in the PnP framework. 
In single-shot learning, ensuring Lipschitz continuity requires architectural constraints within the KAN framework. The use of bounded spline parameters and regularisation within the network ensures controlled outputs, even when the denoiser is adapted to a single image.

\fcircle[fill=deblue]{2pt}\textbf{ Boundedness of the Regularisation Functional.} 
The regulariser $R(x)$ associated with the KAN denoiser is defined implicitly via the proximal operator formulation:
    \begin{equation}
        H_\sigma(x) = \arg\min_{z \in \mathbb{R}^n} \left\{ R(z) + \frac{1}{2\sigma^2} \|z - x\|^2 \right\}.
    \end{equation}
This implies that $R(x)$ is bounded below due to the quadratic penalty term, which prevents $R(x)$ from decreasing. However, in the single-shot setting, the behavior of $R(x)$ depends on how $H_\sigma$ is constructed and adapted. To ensure boundedness:

\noindent
\textcolor{deblue}{\faAngleDoubleRight}  Architectural smoothness (e.g., using B-splines with bounded derivatives) in the KAN denoiser suppresses non-coercive behaviour. \\
\textcolor{deblue}{\faAngleDoubleRight}  The quadratic term in the proximal operator formulation helps dominate \(R(z)\) for large $z$, further ensuring boundedness.

\noindent
We remark that (16) defines \( H_\sigma \) as the proximal operator of a regularisation functional \( R \). However, such an \( R \) exists only if \( H_\sigma \) is a firmly nonexpansive operator — a condition not guaranteed in the learned denoising setting. In our framework, we treat this formulation as a conceptual model aligned with recent works (e.g., \cite{ryu2019plug}), and do not assume the explicit existence or convexity of \( R \). Our convergence results rely on the Lipschitz continuity of \( H_\sigma \), convexity of the data fidelity term, and boundedness of the surrogate regularisation.

\smallskip
\fcircle[fill=deblue]{2pt}\textbf{ Differentiability of the Implicit Regulariser.} 
The implicit regulariser $R(x)$ derived from $H_\sigma$ must have a Lipschitz continuous gradient. This condition can be satisfied if $H_\sigma(x)$ is differentiable and smooth, which follows from the construction of the KAN denoiser using B-splines. In single-shot learning, we ensure the smoothness of $H_\sigma$, avoiding overfitting artifacts or discontinuities introduced during adaptation to the single image.
We emphasise that while (16) conceptually associates the denoiser \( H_\sigma \) with a regularisation functional \( R \), we do not assume that \( R \) is differentiable. However, since the KAN denoiser \( H_\sigma \) is constructed from smooth B-spline basis functions, it is differentiable. This differentiability enables practical use of updates in PnP-ADMM, and contributes to stable convergence behavior. Nonetheless, the differentiability of \( H_\sigma \) does not imply the existence or differentiability of an underlying functional \( R \).

\begin{figure}[t!]
\centering
\includegraphics[width=\textwidth]{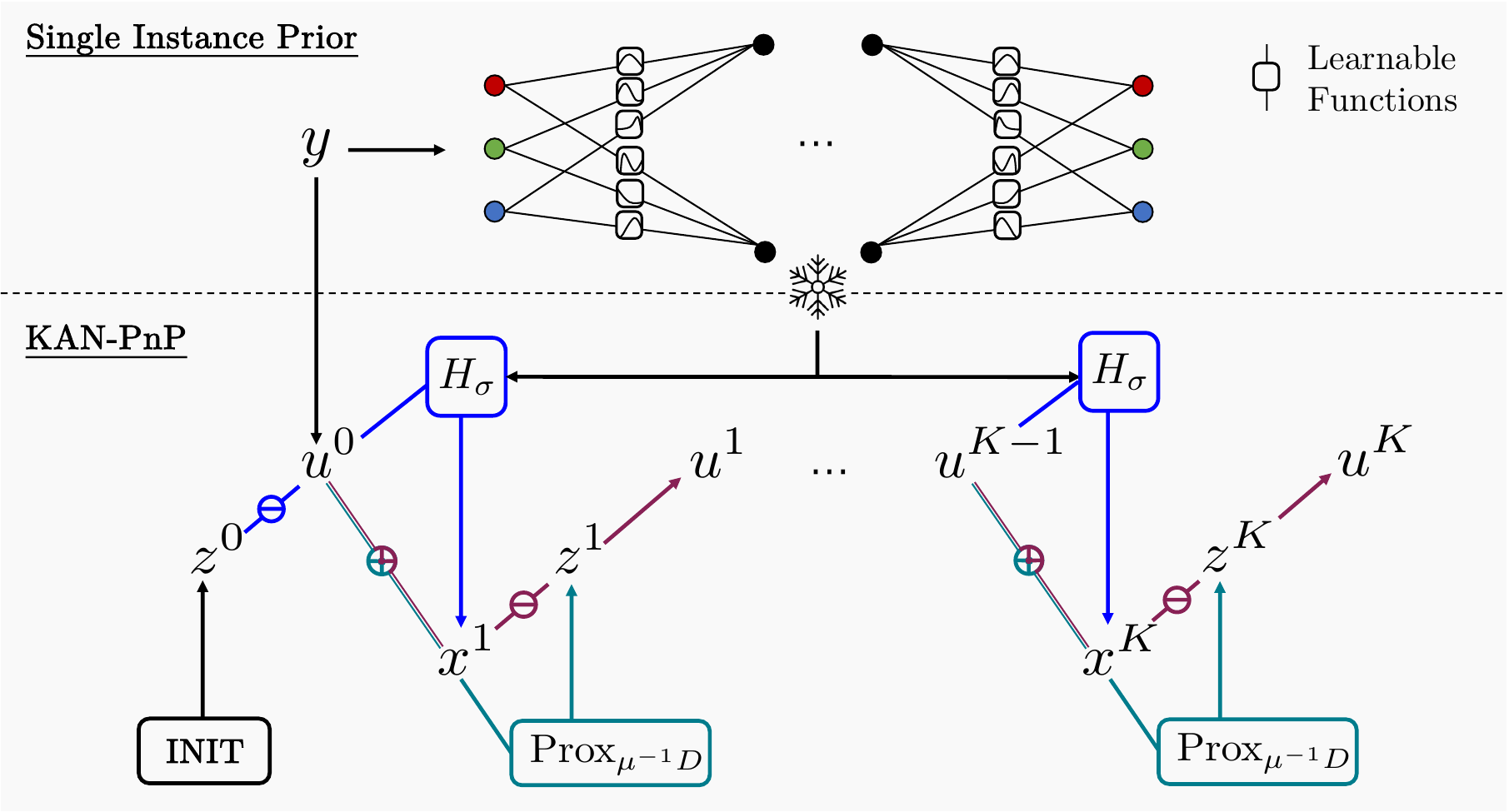}
\caption{The proposed KAN-PnP framework, comprising the training of a single-instance prior and its integration into an iterative optimisation algorithm.}\label{teaser}
\end{figure}

\subsection{Convergence of KAN-PnP with Single-Instance Prior}
In this section, we analyse the convergence of our KAN-PnP framework for solving the inverse problem, where the regulariser is replaced by the KAN denoiser \(H_\sigma\). We recall that our problem is:
\[
\min_{x, z} f(x) + \lambda R(H_\sigma(z)) \quad \text{subject to} \quad x = z,
\]
where \( f(x) \) is the data fidelity term and \( R(H_\sigma(z)) \) represents the implicit regularisation term provided by the KAN denoiser.

The KAN-PnP (see Figure~\ref{teaser}) with ADMM  proceeds by iteratively solving the following three subproblems:
\begin{align}\label{Kan-pnp-admm}
\textbf{I. Denoising Step}: & \ \textcolor{blue}{x^{k+1} = H_\sigma(z^k - u^k)},  \\
\textbf{II. Data Fidelity Step}: & \ \textcolor{deblue}{z^{k+1} = \arg\min_z \left( f(z) + \frac{\mu}{2} \|z - (x^{k+1} + u^k)\|^2 \right)},\nonumber \\
\textbf{III. Dual Update}: & \ \textcolor{wine}{u^{k+1} = u^k + x^{k+1} - z^{k+1}}. \nonumber
\end{align}

In this section, we show that under the assumptions of Lipschitz continuity, convexity of \( f(x) \), and boundedness of the regulariser \( R(x) \),  KAN-PnP algorithm converges to a stationary point.

\smallskip
\faMedapps \textbf{ Key Assumptions Recap.} To ensure convergence of the KAN-PnP framework with the KAN denoiser, we rely on the following assumptions: 
\begin{enumerate} 
\item Convexity of \( f(x) \), which ensures well-posedness of the data fidelity subproblem. 
\item Lipschitz continuity of the KAN denoiser $H$ ensures that the denoising step is stable and does not grow arbitrarily with input perturbations. While this does not imply non-expansiveness (i.e., 1-Lipschitz continuity), convergence can still be guaranteed under this weaker assumption, as discussed in~\cite{ryu2019plug}.
\item Differentiability of the regulariser \( R(x) \), if required for  updates, which is satisfied under the smoothness of \( H_\sigma \). 
\end{enumerate}

\medskip
\begin{theorem}[Fixed-Point Convergence of KAN-PnP] \label{thm2}
Under the assumptions:
\begin{enumerate}[label=(\roman*)]
    \item $f: \mathbb{R}^n \to \mathbb{R}$ is proper, closed, and convex,
    \item $H_\sigma: \mathbb{R}^n \to \mathbb{R}^n$ is Lipschitz continuous with constant $L_H$ (as established in Theorem~1),
    \item The penalty parameter $\mu > L_H$ is constant across iterations,
\end{enumerate}
the PnP-ADMM iterates $\{(x^k, z^k, u^k)\}$ defined in (17) converge to a fixed point $(x^*, z^*, u^*)$ satisfying:
\[
x^* = H_\sigma(z^* - u^*), \quad z^* = \arg\min_z \left( f(z) + \frac{\mu}{2} \|z - (x^* + u^*)\|^2 \right), \quad x^* = z^*.
\]\vspace{-0.5cm}
\end{theorem}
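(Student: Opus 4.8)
The plan is to recast the three-step recursion in~(17) as a Picard iteration $w^{k+1} = \mathcal{T}(w^k)$ of a single operator acting on a suitable reduced state, and then to establish convergence via a contraction (Banach fixed-point) argument. A natural choice is $w^k = (z^k, u^k)$, since $x^{k+1}$ is fully determined from $(z^k,u^k)$ through the denoising step, after which $z^{k+1}$ and $u^{k+1}$ follow; eliminating $x$ yields a self-map $\mathcal{T}$ on $\mathbb{R}^n\times\mathbb{R}^n$ whose fixed points are exactly the triples $(x^*,z^*,u^*)$ in the statement. Once $\mathcal{T}$ is shown to be a contraction, the iterates form a Cauchy sequence, converge to a unique limit, and passing to the limit (using continuity of $H_\sigma$ from Theorem~\ref{thm1} and of the proximal map) certifies the three fixed-point equations, including the consensus constraint $x^*=z^*$, which follows from the stationary dual update $u^*=u^*+x^*-z^*$.

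The concrete steps I would carry out are the following. First, I would record the two Lipschitz ingredients feeding the contraction: (a) the data-fidelity step is $z^{k+1} = \mathrm{prox}_{f/\mu}(x^{k+1}+u^k)$, and since $f$ is proper, closed, and convex, this proximal operator is firmly nonexpansive, hence $1$-Lipschitz in its argument; (b) the denoising step obeys $\|x^{k+1}-x^k\| \le L_H\,\|(z^k-u^k)-(z^{k-1}-u^{k-1})\|$ by Theorem~\ref{thm1}. Second, I would propagate these bounds through the dual update $u^{k+1}=u^k+x^{k+1}-z^{k+1}$ to obtain a recursive inequality for the consecutive differences of the form $\|w^{k+1}-w^k\| \le c\,\|w^k-w^{k-1}\|$, with a constant $c$ expressed through $L_H$ and $\mu$. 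Third, assuming $c<1$, I would conclude that $\{w^k\}$ is Cauchy, extract its limit, and invoke continuity to verify that the limit solves the fixed-point system.

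The main obstacle is the second step: showing that the composite constant $c$ is strictly below $1$ and pinning down precisely how the hypothesis $\mu>L_H$ enters. The subtlety is that $\mathrm{prox}_{f/\mu}$ is nonexpansive for every $\mu>0$, so the penalty parameter does not by itself damp the iteration; mere nonexpansiveness of the prox combined with $L_H$-Lipschitzness of $H_\sigma$ would only yield $c\le L_H$, which need not be below $1$. A genuine contraction must therefore be extracted from the coupling between the denoiser, the prox, and the dual update—most naturally through the Douglas–Rachford/reflected-resolvent form of ADMM, built from $2\,\mathrm{prox}_{f/\mu}-I$ and the denoiser—where finer estimates (cocoercivity of the resolvent, or strong convexity induced by the quadratic penalty) let the penalty $\mu$ offset the denoiser's expansion and drive the contraction factor below $1$ precisely when $\mu>L_H$. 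Establishing this quantitative bound is the crux; once it is in hand, the Cauchy property of $\{w^k\}$, the existence of the limit, and the verification of the three fixed-point equations follow routinely from completeness of $\mathbb{R}^n$ and continuity of $H_\sigma$ and $\mathrm{prox}_{f/\mu}$.
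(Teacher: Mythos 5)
Your proposal follows essentially the same route as the paper's own proof sketch: recast the three updates as a Picard iteration of a composed operator on a reduced state, use firm nonexpansiveness of $\mathrm{prox}_{f/\mu}$ (from convexity of $f$) and $L_H$-Lipschitz continuity of $H_\sigma$ (Theorem~\ref{thm1}), and conclude via the Banach fixed-point theorem, reading off the limit equations --- including $x^*=z^*$ from the stationary dual update --- by continuity.

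However, the obstacle you flag in your final paragraph is a genuine gap, not a deferrable technicality: composing a firmly nonexpansive (hence $1$-Lipschitz) proximal map with an $L_H$-Lipschitz denoiser and the dual update yields at best a Lipschitz constant of order $L_H$ for the composed operator, and the parameter $\mu$ never enters this estimate, because $\mathrm{prox}_{f/\mu}$ is nonexpansive for \emph{every} $\mu>0$. Under the stated hypotheses --- $f$ merely convex, $H_\sigma$ merely $L_H$-Lipschitz --- there is no mechanism by which $\mu > L_H$ forces a contraction factor $c<1$; if $L_H>1$ the composed map can genuinely expand. You should know that the paper's own proof sketch has exactly this hole: it asserts, citing Ryu et al., that the composition ``defines an averaged operator with a contraction factor strictly less than one'' when $\mu>L_H$, but the PnP-ADMM convergence theorem in that reference requires strictly stronger hypotheses --- strong convexity of $f$ together with a Lipschitz bound $\varepsilon<1$ on the \emph{residual} $H_\sigma-\mathrm{Id}$ (the denoiser must be close to the identity) --- neither of which is assumed here. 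Your instinct to seek the missing damping in the Douglas--Rachford/reflected-resolvent structure or in strong convexity induced by the quadratic penalty points in the right direction, but the quadratic penalty in the $z$-update does not make $f$ itself strongly convex, so it cannot supply the needed estimate under assumption (i) alone. In short, your proof cannot be completed as outlined, and neither can the paper's: a correct argument needs either $L_H\le 1$ (a nonexpansive denoiser), or strong convexity of $f$ plus control of the residual $H_\sigma-\mathrm{Id}$, as in the cited reference.
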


\vspace{0.2cm}

\paragraph{Proof Sketch.}
We analyse the convergence of the KAN-PnP ADMM iterations using the fixed-point framework introduced in recent Plug-and-Play literature~\cite{ryu2019plug, hurault2022proximal}. This approach does not assume the existence of an explicit energy functional involving the denoiser, but instead analyses the behavior of the iterative updates as an operator composition acting on the optimisation variables.
The denoising step $x^{k+1} = H_\sigma(z^k - u^k)$ applies a Lipschitz continuous operator $H_\sigma$ with constant $L_H$ (as guaranteed by Theorem~1). The $z$-update step:
\[
z^{k+1} = \arg\min_z \left( f(z) + \frac{\mu}{2} \|z - (x^{k+1} + u^k)\|^2 \right),
\]
is equivalent to the proximal operator of the convex function $f$, evaluated at $x^{k+1} + u^k$. Since $f$ is convex and proper, the proximal operator $\text{Prox}_{f/\mu}$ is firmly nonexpansive.

The dual update $u^{k+1} = u^k + x^{k+1} - z^{k+1}$ is a standard ADMM step enforcing consensus between $x$ and $z$. When combining these three steps, the overall ADMM iteration can be interpreted as a fixed-point iteration of a composed operator acting on $(x, z, u)$.
Under the assumption that $\mu > L_H$, it follows from standard results in fixed-point theory that the composition of the Lipschitz operator $H_\sigma$ and the firmly nonexpansive proximal operator defines an averaged operator with a contraction factor strictly less than one~\cite{ryu2019plug}. By the Banach fixed-point theorem, this guarantees that the sequence $\{(x^k, z^k, u^k)\}$ converges to a unique fixed point $(x^*, z^*, u^*)$ satisfying:
\[
x^* = H_\sigma(z^* - u^*), \quad
z^* = \text{Prox}_{f/\mu}(x^* + u^*), \quad
x^* = z^*.
\]

These conditions characterise a consistent fixed point of the algorithm, even though the objective function is implicit. This result justifies the convergence of the KAN-PnP method under mild and verifiable assumptions, without requiring knowledge of the underlying regulariser associated with $H_\sigma$.
\qed

\textbf{Remark.}  
The convergence result in Theorem~2 adapts the fixed-point framework established in recent Plug-and-Play literature~\cite{ryu2019plug, hurault2022proximal}. While the overall structure follows these works, our analysis is specialised to the single-instance setting, where the denoiser \( H_\sigma \) is trained directly from a noisy observation. This introduces unique architectural properties (e.g., differentiability from B-splines) and excludes any access to external datasets or explicit regularisers. Our proof reflects this structure and confirms convergence under mild assumptions on the denoiser and fidelity term.

\section{Experimental Results \& Discussion} \label{experiment_setting}
\label{sec:experiment}
In this section, we evaluate the performance of our proposed KAN-PnP framework across challenging image reconstruction tasks, including super-resolution and joint optimisation for demosaicing and deconvolution. Our approach leverages a \textit{single-instance prior}, wherein the denoiser is trained directly on the observed image itself, rather than on a dataset of similar images.  The goal of these experiments is to assess the effectiveness of our approach compared to other single-instance denoisers and the conventional methods, including pretrained denoisers and total variation techniques within the Plug-and-Play framework. This training scheme is designed to exploit the unique characteristics of the target image, enhancing reconstruction quality without requiring extensive external datasets. This rationale motivates our comparison with traditional methods trained in a conventional manner.

For the experimental comparison, we followed the dual resizing strategies to preprocess the image. The images are sourced with Creative Commons Licenses which are resized to $512 \times 384$, and the selected data in~\cite{bevilacqua2012low, zeyde2012single} were tested. 
The experiment were conducted on NVIDIA A100 GPU with 80GB RAM. The Plug-and-Play phase ensued with the $\bigtriangledown$-Prox toolbox~\cite{lai2023prox}, adopting the default settings for all tasks.
\textbf{Why Natural Images?}  
While our method is applicable to diverse data types, we focus on natural images in this work for several reasons. First, natural image benchmarks (e.g., Set5, Set14) are widely used in foundational works on single-instance priors (e.g., DIP~\cite{ulyanov2018deep} and  Cheng et al.~\cite{cheng2024singleshot}), enabling direct comparison with prior art. Second, these datasets offer well-understood structure and standardized metrics, which help isolate the methodological contribution of the learned prior. Third, since our aim is to investigate core behavior of single-shot Plug-and-Play priors — a relatively unexplored domain — natural images provide a valuable, interpretable setting for proof-of-concept evaluation.

\fcircle[fill=deblue]{2pt} \textbf{Training Scheme.} The initial pre-training phase for the implicit neural coordinate network including the implicit neural representation (INR) and the implicit Kolmogorov-Arnold Network (KAN) denoising prior incorporated Gaussian noise with a standard deviation of $0.1$.
The single-shot denoising prior training spanned $100$ iterations using a INR network composed of $2$ hidden layers, each with $64$ features, while the KAN network is composed of $3$ hidden layers with $\{128, 32, 16\}$ features with grid $5$.
The learning rate was maintained at $0.001$. Subsequently, image reconstruction was performed over $5$ ADMM iterations, employing dynamically adjusted noise levels and a penalty parameter that followed a logarithmic descent across the steps.

\fcircle[fill=deblue]{2pt} \textbf{Evaluation Protocol.} To ensure a robust comparison, we evaluated the Noise2Self pre-training approach~\cite{krull2019noise2void} on three networks, each undergoing $100$ iterations with a learning rate of $0.01$. The FFDNet~\cite{zhang2018ffdnet} model is designed with $8$ hidden layers and $64$ feature maps per layer, while the UNet~\cite{ronneberger2015u} architecture featured a $4$-level downsampling scheme and began with $32$ feature maps in its initial convolutional layer. For consistency, the optimisation setup mirrored the configuration used in our proposed approach. We compare these results against both the Noise2Self approach and implicit neural MLP, which we denoted as INR, as well as our proposed implicit KAN method. The evaluation of the methods were measured by Peak Signal-to-Noise Ratio (PSNR) and Structural Similarity Index (SSIM), where PSNR and SSIM scores signal are higher when reconstruction quality improved. \textbf{On Parameter Counts.}
While the KAN and INR architectures differ in parameter count, we do not enforce equality since they encode fundamentally different inductive biases. KANs exploit local interpolation priors, while INRs rely on global MLP representations. For this reason, we evaluate models based on reconstruction quality rather than parameter parity.

\subsection{ KAN-PnP in High-Stakes Image Reconstruction}
Our proposed KAN-PnP framework has been extensively evaluated in two challenging inverse problem settings: super-resolution and joint optimisation for demosaicing and deconvolution. Super-resolution tests the model's ability to enhance image detail across varying upscaling factors (2$\times$, 4$\times$ and 8$\times$), while the joint optimisation task emphasises its robustness and adaptability by tackling the intertwined challenges of demosaicing and deconvolution. The latter setting is particularly demanding due to the need to simultaneously address two operations—removing the effects of mosaic patterns while restoring the high-frequency details lost during image acquisition—requiring a  balance of accuracy and computational.

\subsubsection{Yes, You KAN do Super-resolution Better!} 
The results in Table~\ref{SR} underscore the significant advantages of KAN-PnP over a range of competing single-shot techniques, including INR, FFDNet, and UNet, across all scaling factors and datasets. KAN-PnP demonstrates a clear ability to generalise across diverse image types while maintaining robust performance under varying levels of degradation. Its superior adaptability, stability, and capacity to preserve intricate details highlight why it consistently outperforms other methods, particularly in challenging scenarios.

At 2x upscaling, KAN-PnP reports exceptional performance across all samples. For example, in the Dog instance, KAN-PnP achieves a PSNR of 25.82, outperforming FFDNet (24.02) and UNet (18.71) by notable margins. In \textit{texture-heavy} samples like Squirrel and Koala, KAN-PnP achieves PSNR values of 27.07 and 25.00, respectively, further demonstrating its capacity to handle both structured and unstructured features. While INR is a close competitor in these cases, KAN-PnP consistently either matches or exceeds its performance, solidifying its position as a leading single-shot method.
At 4x scaling, the differences between KAN-PnP and its competitors become even more pronounced. For example, on the Bird sample, KAN-PnP achieves a PSNR of 22.19, surpassing both FFDNet (21.86) and INR (22.06). UNet struggles significantly at this scale, achieving only 16.53, indicating its inability to handle moderate degradation effectively. Similarly, in instances like Fox and Giraffe, KAN-PnP maintains a distinct performance edge, with PSNR values of 23.29 and 24.24, respectively, compared to UNet's 14.38 and 14.12. These results emphasise KAN-PnP's ability to balance global structure recovery and fine detail preservation, a challenge that often undermines the performance of its competitors. 

\begin{table}[t!]
\caption{The performance (PSNR(dB)) comparison of different methods on super-resolution (SR) tasks with $2 \times$, $4 \times$, and $8 \times$ upscaling factors.}\label{SR}%
\renewcommand{\arraystretch}{1.4}
\setlength{\tabcolsep}{5.9pt}
\begin{tabular}{l|c|c|c|c|c|c|c|c|c}
\Xhline{0.25ex}
Method        & \multicolumn{3}{c|}{Dog}                           & \multicolumn{3}{c|}{Fox}                           & \multicolumn{3}{c}{Giraffe}                        \\ \hhline{~---------}
              & 2$\times$      & 4$\times$      & 8$\times$      & 2$\times$      & 4$\times$      & 8$\times$      & 2$\times$      & 4$\times$      & 8$\times$      \\ \hline
FFDNet-PnP & 24.02          & 21.98          & 14.88          & 25.10          & 23.19          & 6.51           & 4.29           & 13.20          & 17.44          \\ \hline
UNet-PnP   & 18.71          & 14.34          & 6.86           & 19.82          & 14.38          & 14.99          & 17.81          & 14.12          & 6.31           \\ \hline
INR-PnP      & 25.80 & 22.00 & \textbf{15.40}          & 26.73          & 23.24 & \textbf{16.21} & 27.93          & 24.23          & 17.46          \\ \hline
\cellcolor[HTML]{D7FFD7}KAN-PnP  & \cellcolor[HTML]{D7FFD7}\textbf{25.82}          & \cellcolor[HTML]{D7FFD7}\textbf{22.05}          & \cellcolor[HTML]{D7FFD7}\textbf{15.40} & \cellcolor[HTML]{D7FFD7}\textbf{27.06} & \cellcolor[HTML]{D7FFD7}\textbf{23.29}         & \cellcolor[HTML]{D7FFD7}\textbf{16.21}          & \cellcolor[HTML]{D7FFD7}\textbf{27.94} & \cellcolor[HTML]{D7FFD7}\textbf{24.24} & \cellcolor[HTML]{D7FFD7}\textbf{17.47} \\ 

\Xhline{0.25ex}
\end{tabular}%

\vspace{0.3cm}
\begin{tabular}{l|c|c|c|c|c|c|c|c|c}
\Xhline{0.25ex}
Method        & \multicolumn{3}{c|}{Koala}                         & \multicolumn{3}{c|}{Squirrel}                       & \multicolumn{3}{c}{Bird}                           \\ \hhline{~---------}
              & 2$\times$      & 4$\times$      & 8$\times$      & 2$\times$      & 4$\times$      & 8$\times$      & 2$\times$      & 4$\times$      & 8$\times$      \\ \hline
FFDNet-PnP & 9.02           & 1.09           & 14.58          & 4.86           & 4.86           & 16.34          & 26.34          & 21.86          & 15.71          \\ \hline
UNet-PnP   & 14.24          & 13.46          & 2.22           & 17.14          & 13.66          & 3.37           & 21.82          & 16.53          & 10.69          \\ \hline
INR-PnP     & 24.97          & 20.51          & \textbf{14.79}          & \textbf{27.08} & \textbf{23.05}          & \textbf{16.93}          & \textbf{26.44} & 22.06          & 15.74          \\ \hline
\cellcolor[HTML]{D7FFD7}KAN-PnP  & \cellcolor[HTML]{D7FFD7}\textbf{25.00} & \cellcolor[HTML]{D7FFD7}\textbf{20.56} & \cellcolor[HTML]{D7FFD7}\textbf{14.79} & \cellcolor[HTML]{D7FFD7}27.07          & \cellcolor[HTML]{D7FFD7}\textbf{23.05} & \cellcolor[HTML]{D7FFD7}\textbf{16.93} & \cellcolor[HTML]{D7FFD7}26.43          & \cellcolor[HTML]{D7FFD7}\textbf{22.19} & \cellcolor[HTML]{D7FFD7}\textbf{15.75} \\ 
\Xhline{0.25ex}
\end{tabular}%
\end{table}
%%%%%%%%%%%%%%%%%%%%%%%%%%%%%%%%%%%%%%%%%%%%%%%%%%%%%%%%%%%%%%%%%%%%%%%%%%%%%%%%%%%%%%%%%%%%%%%%%%%%%%%%%%%%%%%%%%%%%%%%%%%%%%
\begin{figure}[!t]
\centering
\includegraphics[width=\textwidth]{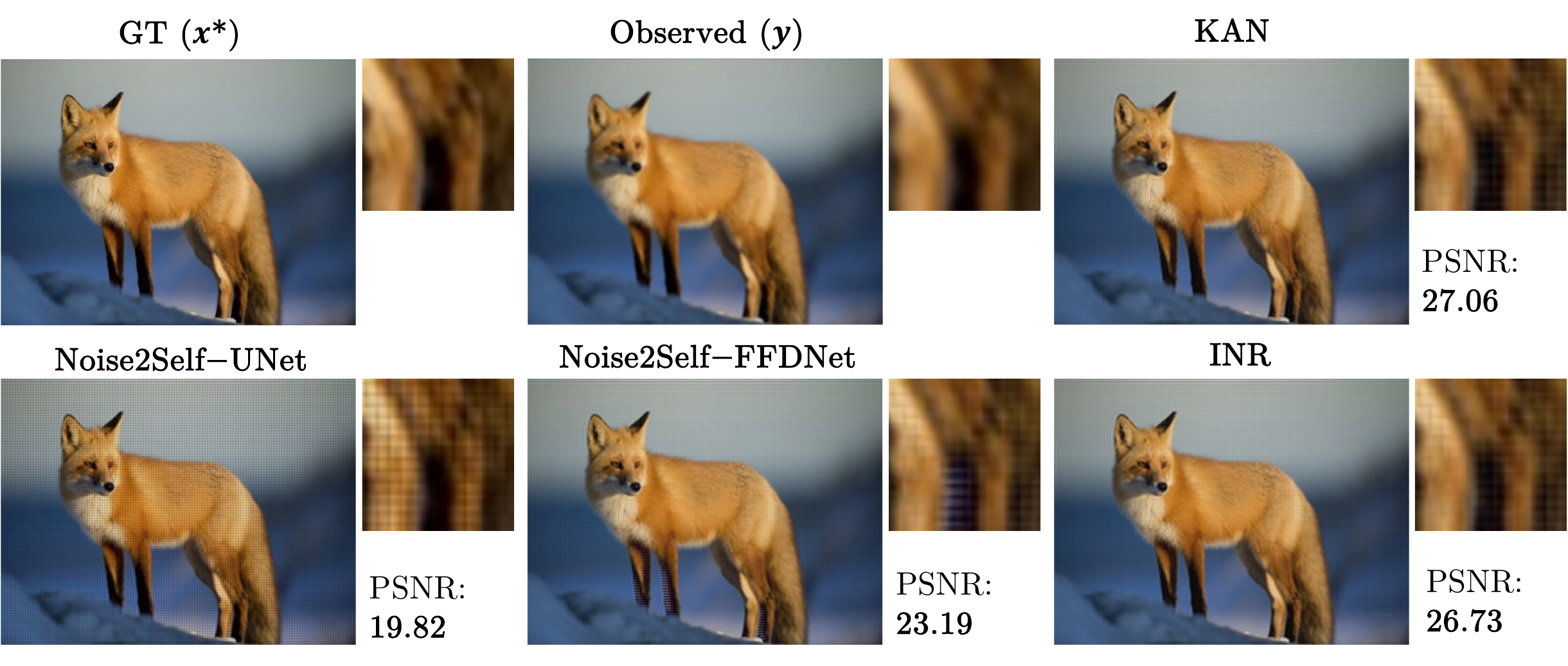}
\caption{The visualisation comparison of the `Fox' example on 2$\times$ Super-Resolution task among the UNet, FFDNet, INR and KAN denoising prior in single-shot setting.}\label{SR2}
\end{figure}

\begin{figure}[t!]
\centering
\includegraphics[width=\textwidth]{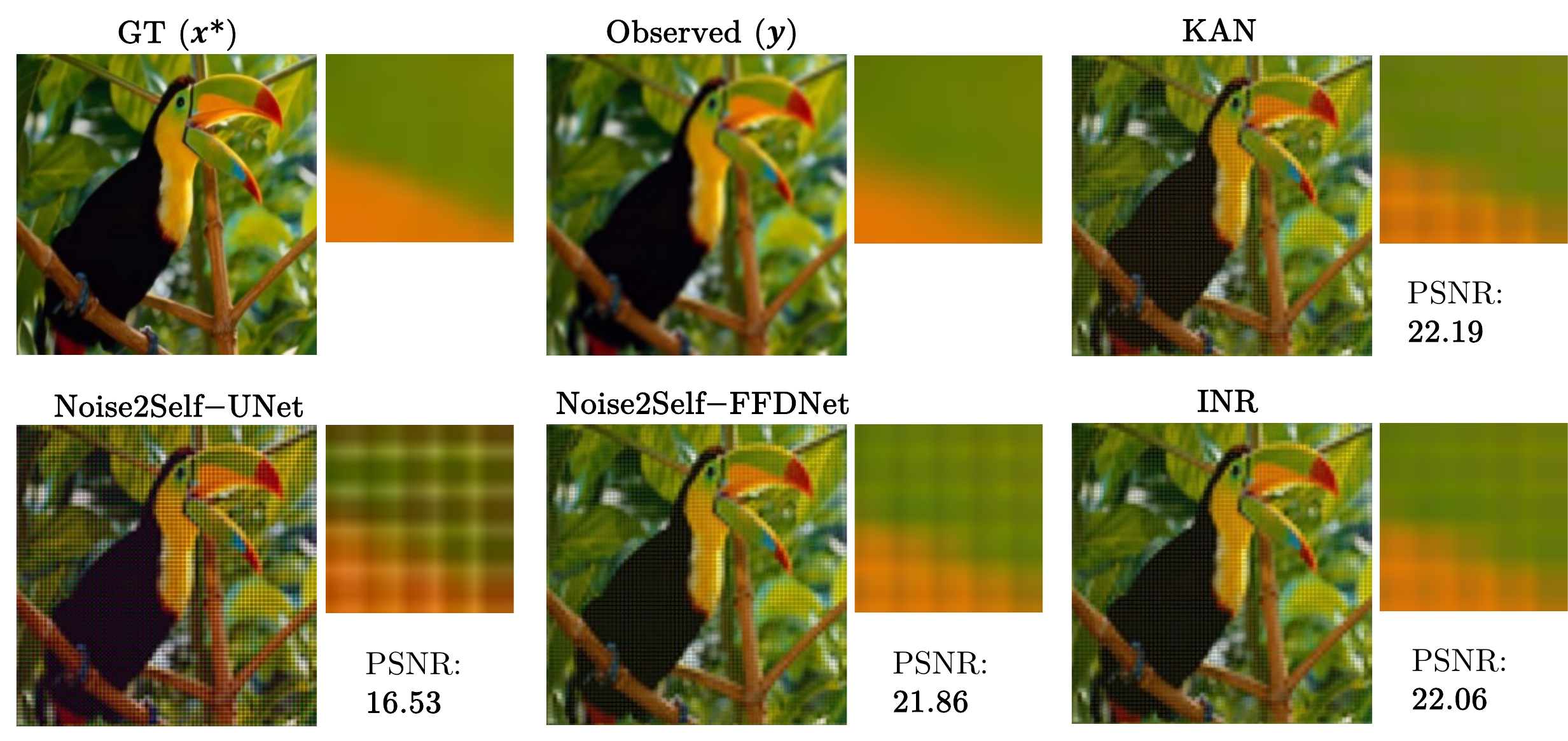}
\caption{Comparing the visual result of the `Bird' example on 4$\times$ Super-Resolution task in single-shot setting with the UNet, FFDNet, INR and KAN denoising prior.}\label{SR4}
\end{figure}

\textbf{Extreme Degradation:} At 8x scaling, the task of image reconstruction becomes particularly challenging, as the loss of high-frequency information is severe. While the margins between KAN-PnP and INR narrow in some datasets (e.g., Giraffe, where KAN achieves 17.47 compared to INR’s 17.46), KAN-PnP demonstrates greater consistency across diverse datasets and excels in preserving fine details. On the Bird dataset at 8×, KAN-PnP achieves a PSNR of 15.75, outperforming FFDNet (15.71) and significantly exceeding UNet’s 10.69. This consistency across datasets highlights the robustness of KAN-PnP in scenarios where competing methods often struggle to generalise or maintain stability. Additionally, KAN-PnP’s architectural design allows it to preserve critical image features such as edges and textures, which are frequently lost in reconstructions by FFDNet and UNet. Particularly and unlike INR, KAN-PnP remains stable under extreme degradation (e.g., 8×) while excelling in texture-heavy and structured datasets alike. 

\textbf{Visual Results.} We further support our numerical results through a set of visual outputs, which are displayed in Figures~\ref{SR2} and~\ref{SR4}. KAN-PnP demonstrates remarkable detail preservation, particularly in the texture of the fox’s fur and the sharpness of its silhouette, as evidenced in the zoomed-in patches. In contrast, INR, despite a close PSNR, exhibits slight edge blurring, while FFDNet oversmooths the background, and UNet introduces grid-like distortions. In the more challenging 4x example (Figure~\ref{SR4}, Bird), KAN-PnP maintains its edge, marginally surpassing INR  and outperforming FFDNet and UNet. KAN-PnP excels in preserving edge sharpness and texture, as seen in the bird’s feathers and beak, while retaining vibrant colour fidelity. INR, while competitive, shows blurring of edges, whereas FFDNet produces oversmoothed results, and UNet introduces significant artefacts and detail loss. These results reaffirm KAN-PnP's ability to deliver visually superior reconstructions with sharper details, fewer artefacts, and better preservation of textures and edges across varying scaling factors, setting it apart as a robust single-shot super-resolution method.

%%%%%%%%%%%%%%%%%%%%%%%%%%%%%%%%%%%%%%%%%%%%%%%%%%%%%%%%%%%%%%%%%%%%%%%%%%%%%%%%%%%%%%%%%%%%%%%%%%%%%%%%%%%%%%%%%%%%%%%%%%%%%%
\begin{table}[t!]
\caption{The performance (PSNR(dB), SSIM) comparison on joint deconvolution and demosaicing of single-shot deep denoising priors (KAN, INR, Noise2Self-UNET, and Noise2Self-FFDNet), pre-trained deep denoising priors following~\cite{lai2023prox} (Pretrained-UNet and Pretrained-FFDNet), and classical denoising priors (TV) in Plug-and-Play framework.}\label{joint}%
\renewcommand{\arraystretch}{1.4}
\setlength{\tabcolsep}{4.7pt}

\begin{tabular}{l|l|c|c|c|c|c|c|c|c}
\Xhline{0.25ex}
                  \multicolumn{1}{l}{}         &        & \multicolumn{2}{c|}{Raccoon}                          & \multicolumn{2}{c|}{Fractals}                        & \multicolumn{2}{c|}{Wolf}                           & \multicolumn{2}{c}{Butterfly}                      \\ \hhline{~~--------}
                  \multicolumn{1}{l}{}         &        & PSNR           & SSIM          & PSNR           & SSIM          & PSNR           & SSIM          & PSNR           & SSIM          \\ \hline
{Classic}                      & TV     & {15.55}          & 0.43          & {13.08}          & 0.24          & {17.27}          & 0.45          & {14.16}          & 0.29          \\ \hline
{\multirow{2}{*}{Pretrain}}                             & UNet   & {16.48}          & 0.42          & {15.53}          & 0.32          & {17.54}          & 0.36          & {15.43}          & 0.28          \\ \hhline{~---------} 
{}                             & FFDNet & {23.27}          & 0.75          & {20.26}          & 0.57          & {25.66}          & 0.81          & {20.26}          & 0.50          \\ \hline
{\multirow{4}{*}{\begin{tabular}[c]{@{}l@{}}Single\\ Instance\end{tabular}}}       & Noise2Self-UNet & {14.93}          & 0.23          & {11.85}          & 0.16          & {18.45}          & 0.28          & {13.03}          & 0.20          \\                       \hhline{~---------} 
{}                             & Noise2Self-FFDNet & {25.12}          & 0.79          & {22.77}          & 0.80          & {30.54}          & 0.90          & {23.86}          & 0.84          \\ \hhline{~---------} 
{}   & INR & {25.63}          & 0.82          & {22.86}          & 0.80          & {30.90}          & \textbf{0.91}          & {23.87}          & \textbf{0.86}          \\ \hhline{~---------}  
{}                             & \cellcolor[HTML]{D7FFD7}KAN & {\cellcolor[HTML]{D7FFD7}\textbf{25.65}} & \cellcolor[HTML]{D7FFD7}\textbf{0.82} & {\cellcolor[HTML]{D7FFD7}\textbf{24.46}} & \cellcolor[HTML]{D7FFD7}\textbf{0.81} & {\cellcolor[HTML]{D7FFD7}\textbf{31.02}} & \cellcolor[HTML]{D7FFD7}\textbf{0.91} & {\cellcolor[HTML]{D7FFD7}\textbf{24.00}} & \cellcolor[HTML]{D7FFD7}\textbf{0.86} \\ \Xhline{0.25ex}
\end{tabular}%
%}
%
\vspace{0.3cm}
\begin{tabular}{l|l|c|c|c|c|c|c|c|c}
\Xhline{0.25ex}
         \multicolumn{1}{l}{}                                         &        & \multicolumn{2}{c|}{Giraffe}                      & \multicolumn{2}{c|}{Mushroom}                     & \multicolumn{2}{c|}{Baby}                         & \multicolumn{2}{c}{Monarch}                      \\ \hhline{~~--------} 
        \multicolumn{1}{l}{}                                         &        & PSNR           & SSIM          & PSNR           & SSIM          & PSNR           & SSIM          & PSNR           & SSIM          \\ \hline
{Classic}                      & TV     & {13.82}          & 0.21         & {17.83}          & 0.53         & {13.13}          & 0.33         & {16.35}          & 0.41         \\ \hline
{\multirow{2}{*}{Pretrain}}                             & UNet   & {16.71}          & 0.26         & {18.09}          & 0.49         & {17.77}          & 0.50         & {17.10}          & 0.41         \\ \hhline{~---------} 
{}                             & FFDNet & {25.75}          & 0.73         & {25.02}          & 0.82         & {25.98}          & 0.80         & {21.82}          & 0.63         \\ \hline
{\multirow{4}{*}{\begin{tabular}[c]{@{}l@{}}Single\\ Instance\end{tabular}}}       & Noise2Self-UNet & {12.62}          & 0.11         & {18.78}          & 0.34         & {12.95}          & 0.13         & {15.39}          & 0.26         \\                      \hhline{~---------}  
{}                            & Noise2Self-FFDNet & {29.15}          & 0.83         & {26.86}          & 0.88         & {28.32}          & 0.83         & {26.31}          & \textbf{0.89}         \\   \hhline{~---------} 
{}   & INR & {30.64}          & \textbf{0.89}         & {26.99}          & \textbf{0.89}         & {29.09}          & 0.86         & {26.48}          & \textbf{0.89}         \\ \hhline{~---------} 
{}                             & \cellcolor[HTML]{D7FFD7}KAN & {\cellcolor[HTML]{D7FFD7}\textbf{30.80}} & \cellcolor[HTML]{D7FFD7}\textbf{0.89} & {\cellcolor[HTML]{D7FFD7}\textbf{27.00}} & \cellcolor[HTML]{D7FFD7}\textbf{0.89} & {\cellcolor[HTML]{D7FFD7}\textbf{29.27}} & \cellcolor[HTML]{D7FFD7}\textbf{0.87} & {\cellcolor[HTML]{D7FFD7}\textbf{26.52}} & \cellcolor[HTML]{D7FFD7}\textbf{0.89} \\ \Xhline{0.25ex}
\end{tabular}%

\end{table}
%%%%%%%%%%%%%%%%%%%%%%%%%%%%%%%%%%%%%%%%%%%%%%%%%%%%%%%%%%%%%%%%%%%%%%%%%%%%%%%%%%%%%%%%%%%%%%%%%%%%%%%%%%%%%%%%%%%%%%%%%%%%%%

\subsubsection{Joint Restoration Tasks} 
The results in Table~\ref{joint} highlight the effectiveness of KAN-PnP when applied to the challenging joint task of demosaicing and deconvolution, demonstrating its superiority over classical, pre-trained, and other single-shot methods. These tasks compound the complexity of inverse problems, as they require the model to handle two distinct operations—recovering high-frequency details lost during the acquisition process and simultaneously reconstructing colour patterns. Despite these challenges, KAN-PnP consistently delivers the highest or comparable Peak Signal-to-Noise Ratio (PSNR) and Structural Similarity Index (SSIM) scores across all datasets, showcasing its robustness and adaptability.

\begin{figure}[!t]
\centering
\includegraphics[width=\textwidth]{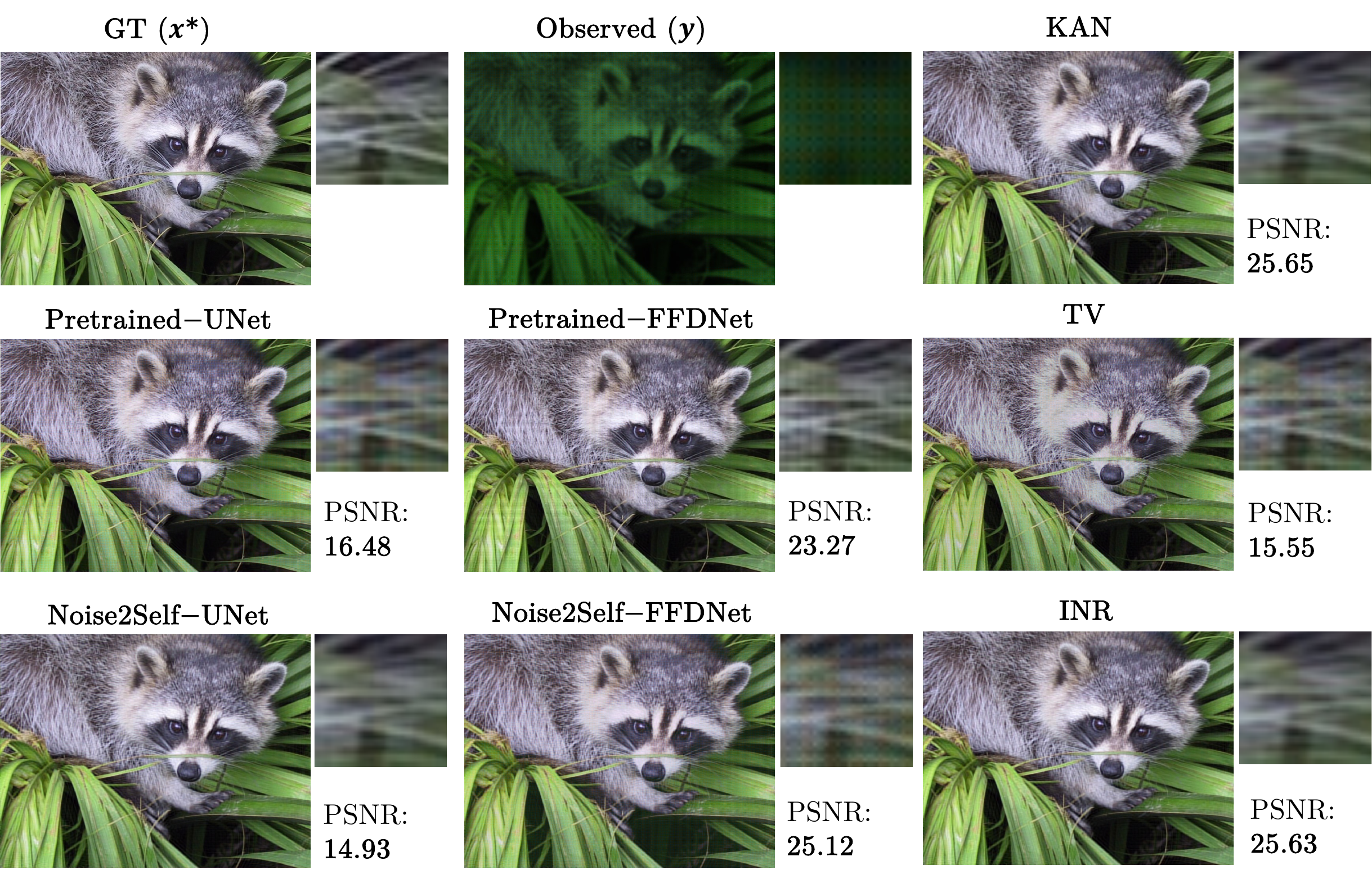}
\caption{Comparison of the visual output of the Plug-and-Play framework of the `Raccoon' example on Joint Deconvoution and Demosaicing task with traditional prior (TV), pretrained prior (UNet and FFDNet) and Single-Shor prior (Noise2Self-UNet, Noise2Self-FFDNet, INR and KAN).}\label{joint-raccoon}
\end{figure}
\begin{figure}[t!]
\centering
\includegraphics[width=\textwidth]{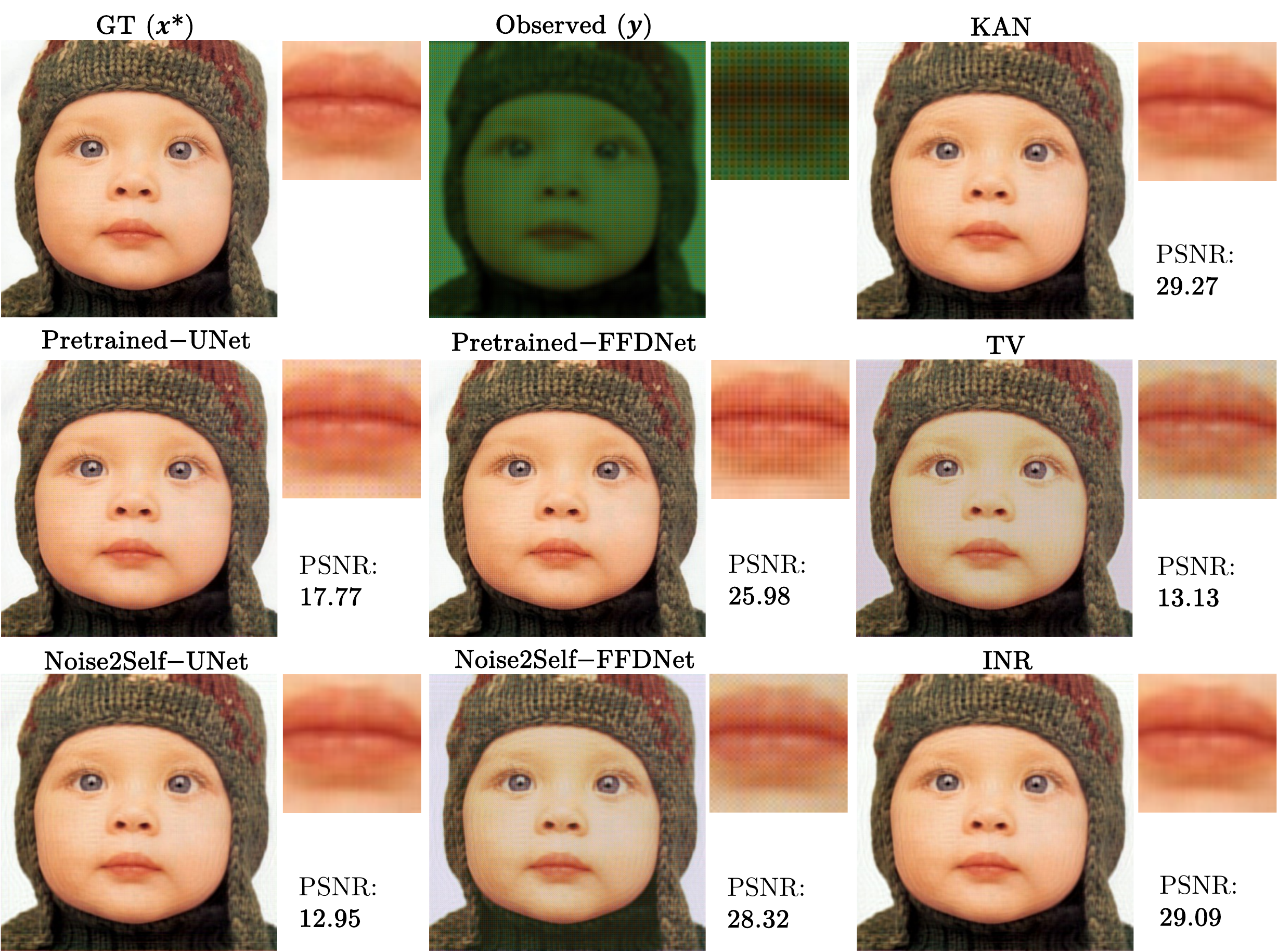}
\caption{The visualisation comparison of the `Baby' example on Joint Deconvoution and Demosaicing task with traditional prior (TV), pretrained prior (UNet and FFDNet) and single-shot prior (Noise2Self-UNet, Noise2Self-FFDNet, INR and KAN) in the Plug-and-Play framework.}\label{joint-baby}
\end{figure}

KAN-PnP outperforms its competitors, including INR and Noise2Self-FFDNet, in both PSNR and SSIM metrics. For example, on the Wolf dataset, KAN achieves a PSNR of 31.02 and an SSIM of 0.91, surpassing the already strong performance of INR (30.90, 0.91) and Noise2Self-FFDNet (30.54, 0.90). Similarly, for the Butterfly dataset, KAN achieves 24.00 PSNR and 0.86 SSIM, outperforming INR (23.87, 0.86) and significantly exceeding pre-trained methods like FFDNet (20.26, 0.50). These results underscore the ability of KAN-PnP to preserve fine details, edges, and textures while accurately reconstructing complex color patterns, making it particularly suited for high-dimensional inverse problems.

KAN-PnP's superior performance can be attributed to its mathematical foundation based on the Kolmogorov-Arnold representation, which ensures smooth, stable approximations while preserving fine details and structures. The Lipschitz continuity of the KAN denoiser ensures stability during optimisation, a critical factor for solving complex inverse problems like joint demosaicing and deconvolution. By effectively balancing local and global features, KAN-PnP surpasses both classical and deep learning-based methods, even in scenarios with limited data availability.

The visualisations in Figures~\ref{joint-raccoon} and~\ref{joint-baby} further support these numerical results, showcasing KAN-PnP’s ability to deliver sharper reconstructions and preserve critical details compared to other methods. For instance, in the Raccoon example (Figure~\ref{joint-raccoon}), KAN-PnP produces the most visually accurate result, effectively restoring the texture of the fur and the structure of the leaves, while methods like Noise2Self-FFDNet and TV fail to maintain texture fidelity and introduce artifacts. Similarly, in the Baby example (Figure~\ref{joint-baby}), KAN-PnP achieves a PSNR of 29.27, recovering fine features such as the softness of the lips and smoothness of the skin, outperforming pre-trained methods like FFDNet, which oversmooths details, and classical methods like TV, which exhibit heavy artifacts. These visual comparisons highlight KAN-PnP’s ability to consistently produce high-quality reconstructions that align with its strong numerical performance.

\subsubsection{Unplugging KAN: Ablation Study}
\fcircle[fill=deblue]{2pt}\textbf{ KAN Denoiser Basis. }The results in Table~\ref{ablation-KAN} provide valuable insights into the impact of different basis functions used in the KAN denoiser on image reconstruction performance. Three basis functions—B-Spline, Fourier, and Symbolic—are evaluated across multiple datasets, with metrics presented in terms of Peak Signal-to-Noise Ratio (PSNR) and Structural Similarity Index (SSIM). These findings allow us to better understand the contributions of the selected basis to the overall reconstruction quality and robustness.
This consistent superiority highlights the strength of B-Spline's smooth and adaptive nature, which is particularly effective at capturing local features and preserving fine textures, such as those in the Wolf and Butterfly datasets. The smoothness of B-Spline functions enables it to balance noise suppression and detail recovery, making it an ideal choice for complex reconstruction tasks. The Fourier basis shows a notable decline in performance compared to B-Spline and Symbolic bases, particularly on texture-rich datasets such as Raccoon and Fractals. The limitations of the Fourier basis may stem from its sensitivity to discontinuities and noise, which can hinder its ability to reconstruct images with complex structures or textures. While Symbolic performs well overall, it tends to lag behind B-Spline on datasets requiring higher levels of smoothness and adaptability, such as the Wolf and Butterfly samples.

%%%%%%%%%%%%%%%%%%%%%%%%%%%%%%%%%%%%%%%%%%%%%%%%%%%%%%%%%%%%%%%%%%%%%%%%%%%%%%%%%%%%%%%%%%%%%%%%%%%%%%%%%%%%%%%%%%%%%%%%%%%%%%
\begin{table}[t!]
\caption{The performance (PSNR(dB) and SSIM) comparison across different KAN layers (B-Spline, Fourier, and Symbolic) for image reconstruction.}\label{ablation-KAN}%
\renewcommand{\arraystretch}{1.4}
\setlength{\tabcolsep}{6.2pt}
\begin{tabular}{l|c|c|c|c|c|c|c|c}
\Xhline{0.25ex}
{Method} & {Metric} & \,Raccoon\,     & Fractals   & Cartoon    & \,\, Wolf \, \quad       & Butterfly  & \,\, Dog\,\, \quad        & Forest     \\ \hline
\multirow{2}{*}{B-Spline}   & PSNR        & \textbf{25.65}      & \textbf{24.46}      & \textbf{23.73}      & \textbf{31.02} & \textbf{24.00} & \textbf{28.80}      & \textbf{20.97} \\ \hhline{~--------} 
                            & SSIM        & \textbf{0.82}       & \textbf{0.81} & \textbf{0.79} & \textbf{0.91} & \textbf{0.86} & \textbf{0.89} & \textbf{0.64} \\ \hline
\multirow{2}{*}{Fourier}    & PSNR        & 23.29      & 20.82      & 19.78      & 23.37      & 19.70      & 23.77      & 18.90      \\ \hhline{~--------}
                            & SSIM        & 0.71       & 0.67       & 0.54       & 0.69       & 0.64       & 0.69       & 0.51       \\ \hline
\multirow{2}{*}{Symbolic}   & PSNR        & 25.48      & 24.40      & 23.51      & 30.58      & 23.86      & 28.29      & 20.91      \\ \hhline{~--------}
                            & SSIM        & 0.81       & 0.80       & 0.77       & 0.90       & 0.84       & 0.87       & \textbf{0.64} \\
\Xhline{0.25ex}
\end{tabular}%
\end{table}
%%%%%%%%%%%%%%%%%%%%%%%%%%%%%%%%%%%%%%%%%%%%%%%%%%%%%%%%%%%%%%%%%%%%%%%%%%%%%%%%%%%%%%%%%%%%%%%%%%%%%%%%%%%%%%%%%%%%%%%%%%%%%%

%%%%%%%%%%%%%%%%%%%%%%%%%%%%%%%%%%%%%%%%%%%%%%%%%%%%%%%%%%%%%%%%%%%%%%%%%%%%%%%%%%%%%%%%%%%%%%%%%%%%%%%%%%%%%%%%%%%%%%%%%%%%%%
\begin{table}[t!]
\caption{The comparison with other methods that using few data (BM3D-PnP and DIP-PnP) with the KAN prior in the Plug-and-Play framework on different number of iterations.}\label{ablation-other}%
\renewcommand{\arraystretch}{1.4}
\begin{tabular}{l|l|c|c|c|c|c|c|c|c}
\Xhline{0.25ex}
        \multicolumn{1}{l}{}                     &         & \multicolumn{2}{c|}{Koala}                           & \multicolumn{2}{c|}{Squirrel}                        & \multicolumn{2}{c}{Bird}   & \multicolumn{2}{c}{Fractals}                             \\ \hhline{~~--------}  
        \multicolumn{1}{l}{}                      &         & {2$\times$}      & 4$\times$      & {2$\times$}      & 4$\times$      & {2$\times$}      & 4$\times$   & {2$\times$}      & 4$\times$      \\ \hline
{\multirow{2}{*}{100 iter}} & BM3D-PnP    & {9.06}           & 8.36           & {16.04}          & 15.05          & {11.42}          & 10.87      &16.07   &11.56    \\ \hhline{~---------}   
{}                          & DIP-PnP & {14.81}          & 14.55          & {18.57}          & 18.01          & {16.31}          & 15.81      &16.14   &16.38    \\ \hline
{\multirow{2}{*}{200 iter}} & BM3D-PnP    & {9.13}           & 8.54           & {16.91}          & 15.70          & {11.66}          & 11.35      &17.07   &12.21    \\ \hhline{~---------} 
{}                          & DIP-PnP & {21.57}          & 18.39          & {23.84}          & 20.70          & {22.18}          & 20.06      &21.58   &18.45    \\ \hline
{5 iter}                    & \cellcolor[HTML]{D7FFD7}KAN-PnP     & {\cellcolor[HTML]{D7FFD7}\textbf{25.00}} & \cellcolor[HTML]{D7FFD7}\textbf{20.56} & {\cellcolor[HTML]{D7FFD7}\textbf{27.07}} & \cellcolor[HTML]{D7FFD7}\textbf{23.05} & {\cellcolor[HTML]{D7FFD7}\textbf{26.43}} & \cellcolor[HTML]{D7FFD7}\textbf{22.19}  & {\cellcolor[HTML]{D7FFD7}\textbf{25.47}} & \cellcolor[HTML]{D7FFD7}\textbf{21.18} \\ \Xhline{0.25ex}

\end{tabular}
\end{table}
%%%%%%%%%%%%%%%%%%%%%%%%%%%%%%%%%%%%%%%%%%%%%%%%%%%%%%%%%%%%%%%%%%%%%%%%%%%%%%%%%%%%%%%%%%%%%%%%%%%%%%%%%%%%%%%%%%%%%%%%%%%%%%

\fcircle[fill=deblue]{2pt}\textbf{ KAN-PnP against other methods with fewer data. } The results in Table~\ref{ablation-other} compare KAN-PnP with other schemes such as BM3D-PnP, which relies on classical denoisers with minimal data, and DIP-PnP, which uses Deep Image Prior to learn priors with small datasets (typically around 15 images). This comparison highlights KAN-PnP’s ability to converge faster and achieve superior performance, even in settings where data is extremely limited. While DIP-PnP follows a philosophy closer to KAN-PnP than BM3D-PnP, its reliance on small datasets for training the denoiser sets it apart, emphasising KAN-PnP’s unique ability to perform robustly with a single-instance prior. KAN-PnP consistently achieves the highest PSNR values across all datasets and scaling factors, outperforming both BM3D-PnP and DIP-PnP. These results highlight the strength of KAN-PnP in reconstructing high-quality images with a single-instance prior, where BM3D-PnP struggles with limited flexibility, and DIP-PnP is constrained by its dependence on small training datasets.

KAN-PnP converges in just 5 iterations, demonstrating remarkable efficiency compared to the 100 or 200 iterations required by BM3D-PnP and DIP-PnP. This rapid convergence is a significant advantage, especially in scenarios where computational resources or time are limited. The ability to achieve superior performance in fewer iterations can be attributed to the Kolmogorov-Arnold Network (KAN), which provides a mathematically grounded, adaptive framework that captures intricate features and textures efficiently.

\section{Conclusion}
In this work, we introduced KAN-PnP, an optimisation framework that integrated Kolmogorov-Arnold Networks (KANs) as denoisers within the Plug-and-Play (PnP) paradigm. KAN-PnP was specifically designed to tackle inverse problems with single-instance priors, where only a single noisy observation was available, thus eliminating the need for large datasets typically required by traditional denoising methods. We demonstrated that KANs, based on the Kolmogorov-Arnold representation theorem, effectively served as priors in these settings, offering a robust solution for denoising.
We provided theoretical guarantees for KAN-PnP, proving its convergence under key assumptions. Extensive experimental results showed that KAN-PnP outperformed existing denoising methods, providing superior performance in single-shot learning scenarios with minimal data. The method exhibited reliable convergence, achieving high accuracy with fewer iterations. Comparisons with state-of-the-art approaches further highlighted KAN-PnP's advantages in both convergence speed and solution quality, demonstrating its potential for solving real-world inverse problems with limited data.
An interesting direction for future work lies in explicitly enforcing strict non-expansiveness of the KAN denoiser (i.e., ensuring \( L_H \leq 1 \)) through architectural constraints. This could lead to tighter theoretical convergence guarantees and further improve the empirical stability of Plug-and-Play methods in the single-instance setting.

\section*{Acknowledgements}
YC is funded by an AstraZeneca studentship and a Google studentship.
CBS acknowledges support from the Philip Leverhulme Prize, the Royal Society Wolfson Fellowship, the EPSRC advanced career fellowship EP/V029428/1, EPSRC grants EP/S026045/1 and EP/T003553/1, EP/N014588/1, EP/T017961/1, the Wellcome Innovator Awards 215733/Z/19/Z and 221633/Z/20/Z, the European Union Horizon 2020 research and innovation programme under the Marie Skodowska-Curie grant agreement No. 777826 NoMADS, the Cantab Capital Institute for the Mathematics of Information and the Alan Turing Institute. 
AIAR gratefully acknowledges the support from Yau Mathematical Sciences Center, Tsinghua University.

\bibliography{sn-bibliography}

\end{document}